\def\BibTeX{{\rm B\kern-.05em{\sc i\kern-.025em b}\kern-.08em
    T\kern-.1667em\lower.7ex\hbox{E}\kern-.125emX}}
\newtheorem{assumption}{Assumption}
\newtheorem{theorem}{Theorem}
\begin{document}

\title{Communication-Efficient Federated Learning with Dual-Side Low-Rank Compression}

\author{
    \IEEEauthorblockN{Zhefeng Qiao$^\star$, Xianghao Yu$^\star$, Jun Zhang$^\dagger$, and Khaled B. Letaief$^{\star \ddagger}$, \emph{Fellow, IEEE} }
    \IEEEauthorblockA{$^\star$Dept. of ECE, The Hong Kong University of Science and Technology, Hong Kong\\
    ${}^\dagger$Dept. of EIE, The Hong Kong Polytechnic University, Hong Kong\\
    ${}^\ddagger$Peng Cheng Laboratory, Shenzhen, China\\
    Email:{
        zqiaoaa@connect.ust.hk,
        eexyu@ust.hk,
        jun-eie.zhang@polyu.edu.hk,
        eekhaled@ust.hk
    }}
}

\maketitle

\begin{abstract}
Federated learning (FL) is a promising and powerful approach for training deep learning models without sharing the raw data of clients. During the training process of FL, the central server and distributed clients need to exchange a vast amount of model information periodically. To address the challenge of communication-intensive training, we propose a new training method, referred to as federated learning with dual-side low-rank compression (FedDLR), where the deep learning model is compressed via low-rank approximations at both the server and client sides. The proposed FedDLR not only reduces the communication overhead during the training stage but also directly generates a compact model to speed up the inference process. We shall provide convergence analysis, investigate the influence of the key parameters, and empirically show that FedDLR outperforms the state-of-the-art solutions in terms of both the communication and computation efficiency.

\end{abstract}

\begin{IEEEkeywords}
Federated learning, low-rank approximation, model compression.
\end{IEEEkeywords}

\section{Introduction}
The last several years have witnessed tremendous developments in deep learning, which revolutionized various applications, such as natural language processing, autonomous driving, and pattern recognition. The success of deep learning heavily relies on the availability of enormous training data samples that are collected in advance and stored in a centralized server. However, with the proliferation of smart mobile devices, massive data tend to be generated and stored locally. Furthermore, transmitting local data to a centralized server is not consistent with the increasing awareness of data privacy protection such as general data protection regulation (GDPR)\cite{voigt2017eu}. 

Federated learning (FL) \cite{mcmahan2017communication} is a promising solution to train a global deep learning model while keeping the private data locally. Instead of sharing the raw privacy-sensitive data, only model parameters are exchanged between the central server and clients. However, modern deep neural networks (DNNs) typically contain hundreds of millions of weight parameters\cite{simonyan2014very}. Therefore, to achieve a satisfactory training performance in FL, one must frequently transmit a large and complex model, which is clearly challenging, especially in communication-constrained application scenarios.

Recently, compression-based methods have been widely adopted to improve the communication efficiency of FL, where only a part of the weight or gradient information is transmitted. A dropout approach was considered in \cite{bouacida2020adaptive}, where a partial network is dropped during the training to reduce the number of parameters to be transmitted. Similarly, Jiang \textit{et. al.}\cite{jiang2019model} proposed an adaptive pruning method called PruneFL, which prunes the model during the training. 
In addition, gradient quantization was considered in \cite{konevcny2016federated,basu2019qsparse} to reduce the bit-width for each parameter to be transmitted. Top-$k$ sparsification was adopted in \cite{aji2017sparse}, which approximates the gradient matrix by its top $k$ entries. Nonetheless, these heuristic compression methods do not explicitly exploit the mathematical structure of the weight and gradient matrices, which may lead to a learning performance loss. Motivated by the fact that deep learning models typically have a low stable rank\cite{martin2018implicit}, low-rank compression was proposed for gradient and weight matrices in \cite{NEURIPS2019_d9fbed9d, martin2018implicit}. 
However, training with a single-side compression may generate a full-rank model for broadcasting at the server, which does not further reduce the communication cost in the downlink transmission and, more importantly, still maintains a bulky network for the inference stage.

In this paper, we propose a new training method for FL, which is referred to as federated learning with dual-side low-rank compression (FedDLR). In our proposed training method, once clients finish their local training, they perform a low-rank compression and then upload the respective model parameters to the central server. The central server then aggregates the received models into a global one and performs a low-rank compression of the aggregated model for broadcasting. By integrating the low-rank compression into FL to extract the principal components of the deep learning models at both sides of the server and clients, FedDLR is able to effectively reduce the dimensions of the models to be exchanged between the server and clients. Unlike existing works that investigated communication overhead reduction\cite{NEURIPS2019_d9fbed9d,zhou2020low} via client-side low-rank compression, the proposed FedDLR can improve both the communication efficiency in the training process and computation efficiency in the inference stage. Moreover, the convergence of the proposed FedDLR is analyzed. It is also proved that, thanks to the dual-side low-rank compression, the communication overhead is monotonically decreasing during the training process. To demonstrate the potential of FedDLR, we shall provide comprehensive experimental results. In particular, we will compare our proposal with two state-of-the-art methods, as well as, demonstrate the advantages of FedDLR in both the training and inference stages. Finally, we will investigate the impact of key parameters on our proposed FedDLR method.

The remainder of this paper is organized as follows. In Section \ref{SM}, we introduce the general FL system and the proposed FedDLR method. Section \ref{TAF} provides a theoretical analysis of FedDLR while experimental results are presented in Section \ref{Exper}. Finally, we conclude this work in Section \ref{Con}.

\section{System Model and Algorithm} \label{SM}
In this section, we first introduce the conventional FL system and the widely-adopted federated averaging algorithm (FedAvg) \cite{mcmahan2017communication}. We will then present the proposed communication-efficient FedDLR method. 

\subsection{Federated Learning System}
In this work, we focus on supervised FL for the ease of presentation. Assume that there are $K$ clients, each client $k$ has a local dataset $D_k$ and loss function $f^{k}$. Every client performs a local training with samples from its own dataset. The goal of the FL system is to find a global weight $\boldsymbol{w}$ which minimizes the global loss function $f$ given by
\begin{align}
    \min_{\boldsymbol{w}} f(\boldsymbol{w})= \frac{1}{\sum_{k=1}^{K} \lvert D_k \rvert} \sum^{K}_{k=1} \lvert D_k \rvert f^{k}(\boldsymbol{w}),
\end{align}
where $\lvert D_k \rvert $ represents the size of the local dataset $D_k$. Without loss of generality, we assume that the sizes of the local datasets are the same for all clients.

For such a complex distributed optimization problem, the optimal solution is typically achieved by variants of stochastic gradient descent (SGD). For example, FedAvg\cite{mcmahan2017communication} is one of the most popular federated learning algorithms, whose main update rules are illustrated in the following:
\begin{equation}
    \boldsymbol{w}^{k}_{t+1} =
      \begin{cases}
        \boldsymbol{w}^{k}_{t} - \eta_t \nabla f_{\boldsymbol{x}_t^k}(\boldsymbol{w}^{k}_t)& \text{if $(t+1) \vert R \neq 0$}\\
        \frac{1}{K}\sum_{k=1}^K \boldsymbol{w}^{k}_{t} - \eta_t \nabla f_{\boldsymbol{x}_t^k}(\boldsymbol{w}^{k}_t)& \text{if $(t+1) \vert R = 0$},
      \end{cases}  
\end{equation}
where $t$ is the iteration index, $\boldsymbol{x}_t^k$ are the data samples extracted from the local dataset $D_k$, $\boldsymbol{w}_t^{k}$ denotes the weight matrix at client $k$, $\eta_t$ is the learning rate, $\nabla f_{\boldsymbol{x}_t^k}(\boldsymbol{w}^{k}_t)$ is the stochastic gradient on $\boldsymbol{x}_t^k$, and $\cdot \vert \cdot$ represents the modulo operation. After every $R$ iterations, the clients upload the local model $\boldsymbol{w}_t^{k}$ to the central server. The central server then aggregates the received models, i.e., $\boldsymbol{w}_t=\frac{1}{K} \sum^{K}_{k=1} \boldsymbol{w}^k_t$, and broadcast the new global model $\boldsymbol{w}_t$ to all clients. This process is repeated until the model converges or reaches a desired performance, e.g., accuracy. 

Note that the communication between the clients and the central server only happens at the aggregation iterations, i.e., when $(t+1) \vert R = 0$. In addition, the communication overhead in iteration $t$ is proportional to the size of the model parameters, which can be extremely large in practice\cite{simonyan2014very}. To address this challenge, we propose a novel training method, namely, FedDLR for communication-efficient FL.

\subsection{Proposed FedDLR}
\begin{figure}[tp]
    \centerline{\includegraphics[width=0.5\textwidth , keepaspectratio]{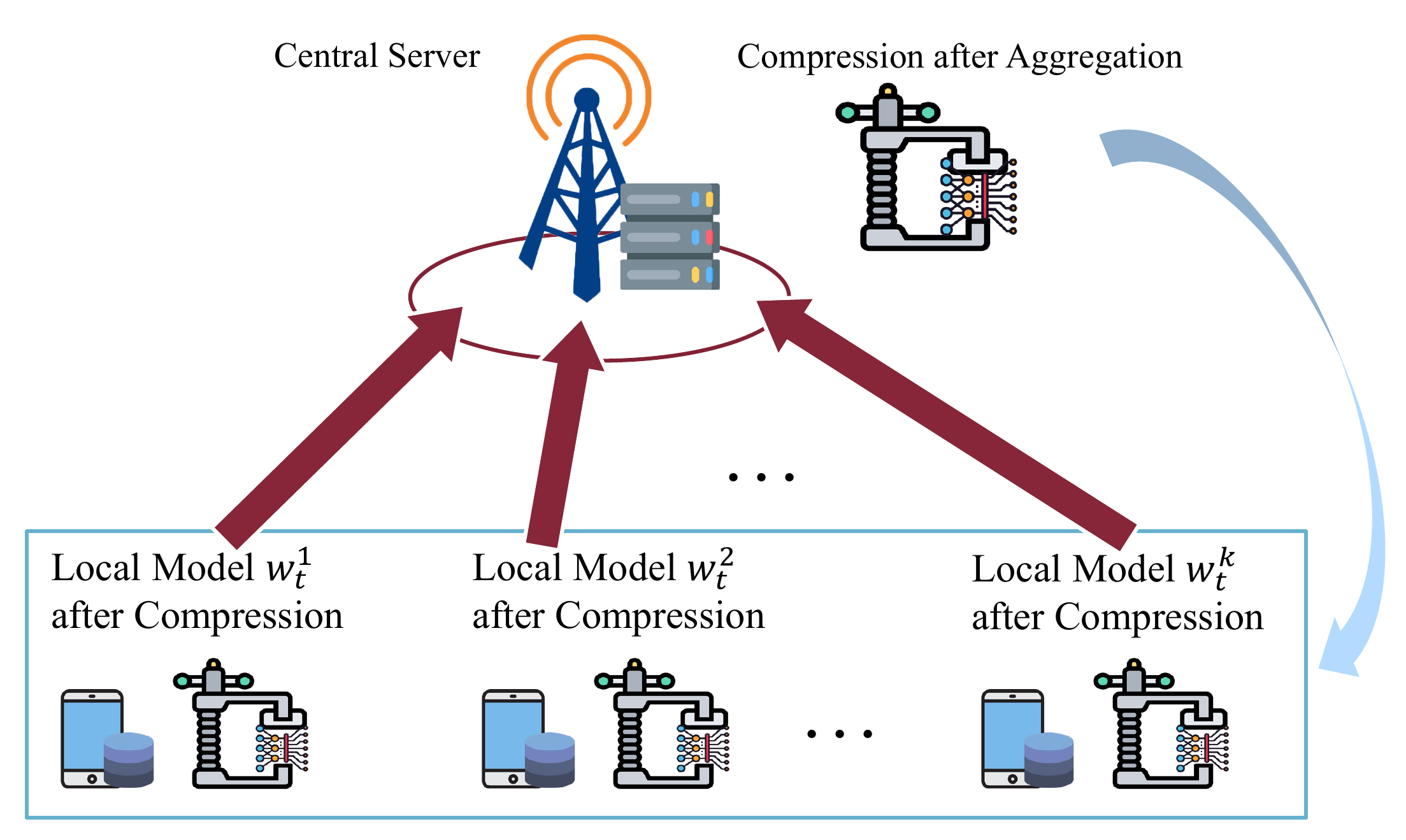}}
    \caption{Federated learning with dual-side compression.}
    \label{system_structure}
\end{figure}
To effectively reduce the communication overhead in FL, we propose a dual-side compression in each aggregation iteration. In particular, we perform compression before the clients upload the local model $\boldsymbol{w}_t^k$ and the server broadcasts the global model $\boldsymbol{w}_t^{k}$, respectively, as shown in Fig.  \ref{system_structure}. 
The corresponding training strategy is given by
\begin{equation}
    \boldsymbol{w}^{k}_{t+1} =
      \begin{cases}
        \boldsymbol{w}^{k}_{t} - \eta_t \nabla f_{\boldsymbol{x}_t^k}(\boldsymbol{w}^{k}_t) \qquad \text{if $(t+1) \vert R \neq 0$} &\\
        C_1\Big(\frac{1}{K}\sum_{k=1}^K C_2\Big(\boldsymbol{w}^{k}_{t} - \eta_t \nabla f_{\boldsymbol{x}_t^k}(\boldsymbol{w}^{k}_t)\Big)\Big)&\\
        \qquad \qquad \qquad \qquad \qquad \text{if $(t+1) \vert R = 0,$ }&
      \end{cases}    
      \label{FedDLR}   
\end{equation}
where $C_1(\cdot)$ and $C_2(\cdot)$ are the two compression functions before broadcasting at the sever and uploading at the clients, respectively.

It has been found that deep learning models are prone to be low-rank in many machine learning applications\cite{martin2018implicit}. Therefore, among various existing compression algorithms, we resort to low-rank compression for FL. Such kind of compression methods can also be considered as finding the optimal approximation of the weight matrices with rank constraints, known as the low-rank approximation. In this paper, we compress the model via the energy-based truncated singular value decomposition (TSVD)\cite{sadek2012svd}. Correspondingly, the two compression functions $C_1(\cdot)$ and $C_2(\cdot)$ in \eqref{FedDLR} are given by
\begin{align}\label{svd}
    C_l(\mathbf{W}) = \sum_{i=1}^{r} \sigma_i \mathbf{u}_i \mathbf{v}_i^T, \quad l \in \{1,2\},
\end{align}
where $\mathbf{W}$ is the  matrix to be compressed, $\sigma_i$ is the $i$-th largest singular value of $\mathbf{W}$ while $\mathbf{u}_i$ and $\mathbf{v}_i$ are the corresponding left and right singular vectors, respectively. Furthermore, $r$ denotes the desired compression rank. Intuitively, a smaller value of $r$ results in a lower communication overhead during the training of FL, which, however, maintains less information of the model. Hence, the desired rank $r$ for compression is a key parameter in FedDLR. In this paper, we adopt an energy-based criterion to dynamically adapt this value in each communication iteration. In particular, we determine $r$ as the smallest integer satisfying
\begin{align}
    \sum_{i=1}^{r} \sigma_i^2 \geq e  \sum_{i=1}^{\mathrm{Rank}(\boldsymbol{w})} \sigma_i^2,
    \label{e_TSVD}
\end{align}
where ${\mathrm{Rank}}(\cdot)$ denotes the rank of a matrix and $e \in (0,1]$ is a hyper-parameter representing the compression threshold. With a larger value of $e$, more principal components are kept and thus, the compression rank $r$ is larger. While for simplicity we assume that a common parameter $e$ is adopted at both the server and clients, it is straightforward to extend to the case where they are different.
The proposed FedDLR is summarized in \textbf{Algorithm \ref{FedDLR_Algo}}.

Next, we take the compression of the global model $\boldsymbol{w}_t$ as an example to illustrate the communication overhead reduction. For a given weight matrix $\boldsymbol{w}_t\in\mathbb{R}^{m \times n}$, there exists a low-rank approximation $\boldsymbol{w}_t \approx \mathbf{U}_t \times \mathbf{V}_t$, 
where $\mathbf{U}_t = [\sigma_{t1} \mathbf{u}_{t1}, \cdots, \sigma_{tr} \mathbf{u}_{tr}]$ and $\mathbf{V}_t = [\mathbf{v}_{t1}, \cdots, \mathbf{v}_{tr}]^T$ 
are full rank matrices with dimensions $m \times r$ and $r \times n$, respectively. As long as the desired compression rank $r$ is set to be small enough, i.e., $r < \frac{mn}{m+n}$, with the energy-based TSVD in  FedDLR, the number of model parameters to be transmitted between the clients and the server is reduced from $m \times n$ to $r \times (m+n)$. 

\emph{Remark 1:}
    Different from those communication reduction approaches only with client-side compression \cite{NEURIPS2019_d9fbed9d,zhou2020low}, we propose a dual-side low-rank compression training method in \eqref{FedDLR}, which is able to reduce  both the uplink and downlink communication overhead. 
    Moreover, the inference process of DNNs typically involves multiplication between matrices. Note that for $\mathbf{a} \in \mathbb{R}^{n}$, the computation complexity of the multiplication between $\boldsymbol{w}_t$ and $\mathbf{a}$ can be reduced from $\mathcal{O}(mn)$ to $\mathcal{O}(r(m+n))$ by applying the low-rank approximation to the multiplication $\mathbf{U}_t \mathbf{V}_t \mathbf{a}$. Hence, with the proposed FedDLR, we can improve the computation efficiency for inference, which is another inherent advantage of our proposed method. 
    More importantly, the communication cost  is non-increasing during the training stage, which we shall analytically show in the next section.

\begin{algorithm}[tp]
    \SetKwProg{SE}{Server Executes:}{}{}
    \SetKwProg{CE}{Clients Execute:}{}{}
    \let\oldnl\nl
    \newcommand{\nonl}{\renewcommand{\nl}{\let\nl\oldnl}}%
    \SetAlgoLined
    \SetKwProg{Func}{Function}{}{}
        Initialize all clients with parameter $\boldsymbol{w_0}$\\
        \CE{}{
        \For(){each client $k=1, \cdots, K$}{
            Download $\mathbf{U}_t, \mathbf{V}_t$ from the server \\
            $\boldsymbol{w}_t^k \gets \mathbf{U}_t \times \mathbf{V}_t$ \\
            \For(){each iteration $r = 0, \cdots, R-1$}{
                $\boldsymbol{w}_{t+r+1}^{k}$ $\gets \boldsymbol{w}^{k}_{t+r} - \eta_{t+r} \nabla f^k_{\boldsymbol{x}_{t+r}^k}(\boldsymbol{w}^{k}_{t+r})$\\
            }
            $\mathbf{U}^k_{t+R}, \mathbf{V}^k_{t+R} \gets$ \textit{LRCompression($\boldsymbol{w}_{t+R}^{k}$, e)}\\
            Upload $\mathbf{U}^k_{t+R}, \mathbf{V}^k_{t+R}$ to the server\\
        }
        }
        \nonl \DontPrintSemicolon \; 
        \SE{}{
        \For{$k=1, \cdots, K$}{
            Receive $\mathbf{U}^k_{t+R}, \mathbf{V}^k_{t+R}$ from client $k$ \\
            $\boldsymbol{w}^k_{t+R} \gets \mathbf{U}^k_{t+R} \times \mathbf{V}^k_{t+R}$
        }
        $\boldsymbol{w}_{t+R+1} \gets \frac{1}{K}\sum_{k=1}^{K} \boldsymbol{w}_{t+R}^k$\\
        $\mathbf{U}_{t+R+1}, \mathbf{V}_{t+R+1} \gets$ \textit{LRCompression($\boldsymbol{w}_{t+R+1}$, e)}\\
        Broadcast $\mathbf{U}_{t+R+1}, \mathbf{V}_{t+R+1}$ to clients \\
        }
        \nonl \DontPrintSemicolon \; 
        \Func{LRCompression($\boldsymbol{w}$, e)}{
            $\mathbf{u}_i, {\sigma}_i, \mathbf{v}_i \gets \textit{SVD}(\boldsymbol{w})$ according to \eqref{svd} \\
            $Total \gets 0$ \\
            $r \gets 0$\\
            \While(){True}{                
                $r \gets r+1$\\
                $Total \gets Total + \sigma_r^2$ \\
                \If(){$\frac{Total}{\sum_{i=1}^{\mathrm{Rank}(\boldsymbol{w})}{\sigma}_i^2} \geq e$}{
                    break
                }
            }
            Return $\mathbf{U}=[\sigma_1 \mathbf{u}_1, \cdots, \sigma_r \mathbf{u}_r]$ , $\mathbf{V} = [\mathbf{v}_1, \cdots, \mathbf{v}_r]^T$
        }
    \caption{Federated Learning with Dual-Side Low-Rank Compression (FedDLR)}
    \label{FedDLR_Algo}
\end{algorithm}

\section{Theoretical Analysis} \label{TAF}
In this section, we first provide a convergence analysis of the proposed FedDLR. We  then prove that during the local training and aggregation, the rank of the learning model is non-increasing, which is crucial for the communication overhead reduction.
\subsection{Convergence Analysis}
Before proving the convergence of the proposed FedDLR, we first present three key assumptions, based on which the theoretical results in this section are derived.
\begin{assumption}[Smoothness]
     The local loss function $f^{k}:\mathbb{R}^{m \times n} \rightarrow \mathbb{R}$ at each client $k$ is $L$-smooth, i.e., for  $ \boldsymbol{x}, \boldsymbol{y} \in \mathbb{R}^{m \times n}$, we have $f^{k}(\boldsymbol{y}) \leq f^{k}(\boldsymbol{x}) + \langle \nabla f^{k}(\boldsymbol{x}), \boldsymbol{y}-\boldsymbol{x} \rangle + \frac{L}{2}{\|\boldsymbol{y}-\boldsymbol{x}\|}^2$. 
     \label{smoothness_assum}
\end{assumption}
Following {Assumption \ref{smoothness_assum}}, it can be shown that the gradients are $L$-Lipchitz, i.e., $\|\nabla f^{k}(\boldsymbol{y}) - \nabla f^{k}(\boldsymbol{x})\| \leq L\|\boldsymbol{y}-\boldsymbol{x}\|$.
\begin{assumption}[Bounded Gradient]
    For $\boldsymbol{w}^{k}_t \in \mathbb{R}^{m \times n}$, the gradient is bounded by $ \mathop{\mathbb{E}}_{\boldsymbol{x}_t^k \sim D_k}[{\|\nabla f^k_{\boldsymbol{x}_t^k}(\boldsymbol{w}^{k}_t)\|}^2] \leq G_1^2$, where $G_1$ is a non-negative constant. 
    \label{gradient_assum}
\end{assumption}
Based on {Assumption 2}, we  have $ \mathop{\mathbb{E}}_{\boldsymbol{x}_t^k \sim D_k}[||\nabla f^k_{\boldsymbol{x}_t^k}(\boldsymbol{w}^{k}_t) - \nabla f^{k}(\boldsymbol{w}^{k}_t)||^2] \leq \delta^2$, where $\delta$ is a non-negative constant and $ \nabla f^{k}(\boldsymbol{w}^{k}_t) =\mathop{\mathbb{E}}_{\boldsymbol{x}_t^k \sim D_k}[\nabla f^k_{\boldsymbol{x}_t^k}(\boldsymbol{w}^{k}_t)]$.
We next introduce an auxiliary sequence  $\tilde{\boldsymbol{w}}^{k}_{t+1} = \boldsymbol{w}^{k}_{t} - \eta_{t} \nabla f^k_{\boldsymbol{x}^{k}_t}(\boldsymbol{w}^{k}_t)$ with $\tilde{\boldsymbol{w}}^{k}_0 = \boldsymbol{w}^{k}_0$ being the initial parameter.
\begin{assumption}[Bounded Weight]
    For $\tilde{\boldsymbol{w}}^{k}_t \in \mathbb{R}^{m\times n}$, we have $\|\tilde{\boldsymbol{w}}^{k}_t\|^2 \leq G_2^2$, where $G_2$ is a non-negative constant.
    \label{weight_assum}
\end{assumption}
{Assumptions \ref{smoothness_assum}} and {\ref{gradient_assum}} are common in the convergence analysis under the conventional FL setting\cite{basu2019qsparse, liu2021hierarchical}, which basically means that the variance of the gradients at all clients is bounded. {Assumption \ref{weight_assum}} is also a widely-used assumption for the convergence analysis of  machine learning models, e.g., recurrent neural networks \cite{pmlr-v108-chen20d}. Based on these assumptions, we have the following convergence guarantee for the proposed FedDLR.
\begin{theorem}
    Denote $T$ as the total number of iterations and $\mathcal{I}_T = \{t_0, t_1, \dots, t_h\} \in \{0, 1, \cdots, T-1\}$ as the set of aggregation indices, where $\lvert \mathcal{I}_T \rvert = \lceil \frac{T}{R} \rceil$ is the total number of aggregation steps. Likewise, define $H=\lvert \mathcal{I}_T \rvert $ and denote $b$ as the  mini-batch size, then we have
    \begin{align}
        \frac{1}{4KT}\sum_{t=0}^{T}\sum_{k=1}^{K} &\mathop{\mathbb{E}} \| \nabla f^k(\boldsymbol{w}^{k}_t)\|^2 \leq \frac{\mathop{\mathbb{E}}[f(\tilde{\boldsymbol{w}}_0)] - f^\star}{\eta_t T} + \frac{\eta_t L \delta^2}{b K}\nonumber\\
        &+ 2 \eta_t^2 L^2 G_1^2 R^2  
           + \frac{4(1-e^2)HL^2G^2_2}{T}.
        \label{theoremConvergenceEq}
    \end{align}  
    \label{theoremConvergence}
\end{theorem}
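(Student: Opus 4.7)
The plan is to adapt the standard stochastic-gradient descent convergence argument to the federated setting with dual-side compression, using the auxiliary sequence $\tilde{\boldsymbol{w}}^{k}_{t+1} = \boldsymbol{w}^{k}_{t} - \eta_{t}\nabla f^k_{\boldsymbol{x}^{k}_t}(\boldsymbol{w}^{k}_t)$ as a proxy iterate that strips away the compression step. First I would apply Assumption~\ref{smoothness_assum} to the averaged auxiliary iterate $\tilde{\boldsymbol{w}}_t = \frac{1}{K}\sum_k \tilde{\boldsymbol{w}}^k_t$ to obtain a one-step descent inequality
\begin{equation*}
\mathbb{E}[f(\tilde{\boldsymbol{w}}_{t+1})] \leq \mathbb{E}[f(\tilde{\boldsymbol{w}}_t)] + \mathbb{E}\langle \nabla f(\tilde{\boldsymbol{w}}_t), \tilde{\boldsymbol{w}}_{t+1}-\tilde{\boldsymbol{w}}_t\rangle + \frac{L}{2}\mathbb{E}\|\tilde{\boldsymbol{w}}_{t+1}-\tilde{\boldsymbol{w}}_t\|^2 .
\end{equation*}
Taking expectation over the mini-batch sampling, the inner product produces the dominant $-\eta_t \mathbb{E}\|\nabla f^k(\boldsymbol{w}^k_t)\|^2$ contribution together with a residual arising from the discrepancy between $\boldsymbol{w}^k_t$ and $\tilde{\boldsymbol{w}}_t$.

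Second, I would handle the two routine error sources separately. Decomposing $\mathbb{E}\|\tilde{\boldsymbol{w}}_{t+1}-\tilde{\boldsymbol{w}}_t\|^2$ into a bias plus a stochastic part, Assumption~\ref{gradient_assum} together with the $1/b$ variance reduction from mini-batching and the $1/K$ reduction from averaging across $K$ clients yields the $\eta_t L\delta^2/(bK)$ term. The client-drift residual $\|\boldsymbol{w}^k_t-\tilde{\boldsymbol{w}}_t\|^2$ accumulated between two consecutive aggregation points is controlled by at most $R$ stochastic gradient steps whose norms are capped by $G_1$, producing the $2\eta_t^2 L^2 G_1^2 R^2$ contribution through a straightforward inductive bound and the $L$-Lipschitz property of $\nabla f$.

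The main obstacle is to control the extra error injected at the $H = \lvert\mathcal{I}_T\rvert$ aggregation indices by the two TSVD operators $C_1$ and $C_2$. The crucial property I would extract from the energy criterion \eqref{e_TSVD} is the Frobenius bound
\begin{equation*}
\|C_l(\mathbf{W})-\mathbf{W}\|^2 \leq (1-e^2)\|\mathbf{W}\|^2, \qquad l\in\{1,2\},
\end{equation*}
so that, combined with Assumption~\ref{weight_assum} capping $\|\tilde{\boldsymbol{w}}^k_t\|^2$ by $G_2^2$, each aggregation round costs at most $(1-e^2)L^2 G_2^2$ in the smoothness expansion, and accumulating over the $H$ rounds contributes the final $4(1-e^2)HL^2 G_2^2/T$ term after normalization. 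To close the argument I would telescope the one-step inequality from $t=0$ to $T-1$, apply $f^\star \leq f(\tilde{\boldsymbol{w}}_T)$ to absorb the terminal value, and divide through by $\eta_t T$ to reach \eqref{theoremConvergenceEq}. The delicate bookkeeping is disentangling the compression penalty from the SGD-variance and client-drift contributions, since $\boldsymbol{w}^k_t$ coincides with the compressed image of $\tilde{\boldsymbol{w}}^k_t$ precisely at aggregation steps; this forces me to repeatedly insert and cancel $\tilde{\boldsymbol{w}}^k_t$ so that the three error sources are attributed cleanly without double counting.
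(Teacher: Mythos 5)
Your proposal follows essentially the same route as the paper: the paper also works with the auxiliary uncompressed iterates $\tilde{\boldsymbol{w}}^k_t$, imports the first three terms of \eqref{theoremConvergenceEq} from the standard local-SGD analysis of \cite{basu2019qsparse} (initial suboptimality, mini-batch variance scaled by $1/(bK)$, and an $R^2$ client-drift term), and then isolates the compression penalty as $\frac{2L^2}{T}\sum_t \mathbb{E}\|\tilde{\boldsymbol{w}}_t-\boldsymbol{w}_t\|^2$, which is nonzero only at the $H$ aggregation indices. The one point to correct is your key compression lemma. The energy criterion \eqref{e_TSVD} gives $\|C_l(\mathbf{W})-\mathbf{W}\|^2 \le (1-e)\|\mathbf{W}\|^2$, not $(1-e^2)\|\mathbf{W}\|^2$; the factor $1-e^2$ in the theorem is \emph{not} a property of a single TSVD but emerges from splitting each aggregation step into the client-side error $\|\tilde{\boldsymbol{w}}_{t_h}-\frac{1}{K}\sum_k C_2(\tilde{\boldsymbol{w}}^k_{t_h})\|^2 \le (1-e)G_2^2$ and the server-side error $\|\frac{1}{K}\sum_k C_2(\tilde{\boldsymbol{w}}^k_{t_h}) - C_1(\cdot)\|^2 \le (1-e)eG_2^2$, whose sum is $(1-e)(1+e)G_2^2=(1-e^2)G_2^2$; the prefactor $4$ then comes from the triangle-inequality doubling of the $2L^2$ coefficient. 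Your version of the bound is a valid but weaker inequality, and applying it to each of the two compressions separately would yield $8(1-e^2)$ rather than $4(1-e^2)$ in the last term, so the bookkeeping at the aggregation step needs to follow the two-stage decomposition above to recover the stated constant. Otherwise the structure and all other error attributions match the paper's argument.
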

\vspace{-2em}
\begin{proof}
    Please refer to Appendix \ref{Proof of Theorem 1}.
\end{proof}
It is noted that in the right hand side of \eqref{theoremConvergenceEq}, the first term represents the distance between the initial point and the optimal solution. The second term is the error led by local SGD while the third term comes from the gradient deviations across all clients. With a decaying learning rate $\eta_t$, all of these three terms diminish to zero asymptotically. It is also noted that the last term represents the error introduced by the low-rank compression steps. When no compression is used, i.e., $e=1$, the convergence analysis reduces to the one for the conventional FedAvg case\cite{wang2019adaptive}.

\emph{Remark 2:}
    The last term in \eqref{theoremConvergenceEq} can be considered as a training error term controlled by $e$. It is resulted from the compression operation, which means that FL with low-rank compression may not converge to the optimal solution. However, since $1-e^2$ is monotonically decreasing over $e \in (0, 1]$, it follows that, when $e$ increases, i.e, a decrease in the compression level,  this error term  converges to zero asymptotically. In the next section, the impact of the hyper-parameter $e$ will be investigated via simulation.

\subsection{Communication Overhead}
As introduced in Section \ref{SM}, the communication overhead between each client and the central server is proportional to the desired rank $r$ for compression. However, since the energy-based criterion in (\ref{e_TSVD}) does not adjust the rank directly, it is unclear how the communication overhead changes during the training process, which is studied in the following theorem. First, following \cite{xu2020trp},  we extend {Assumption \ref{gradient_assum}} to the following assumption.
\begin{assumption}
    For  weight matrices $\boldsymbol{w}^{k}_t\in \mathbb{R}^{m \times n}$, we have $\max\{\|\tilde{\boldsymbol{w}}_{t_h}^k - \boldsymbol{w}^k_{t_{h-1}}\|,\|C_2(\tilde{\boldsymbol{w}}_{t_h}^k) - \boldsymbol{w}^k_{t_{h-1}}\|\} \leq G_3$, where $G_3$ satisfies $ G_3 \leq \sqrt{1-e}\min\{\|\tilde{\boldsymbol{w}}_{t_h}^{k}\|, \| \frac{1}{K}\sum_{k=1}^{K}C_2(\tilde{\boldsymbol{w}}_{t_h}^k)\|\}.$
    \label{extend_gradient}
\end{assumption}
Note that the norm of the weight difference is assumed to be bounded by that of the weight matrix in \cite{xu2020trp}, where centralized low-rank model training was investigated.
We extend this assumption to FL, where the norm of the weight difference is bounded by the minimum of the the weight norms before compression at the dual sides.
{Assumption \ref{extend_gradient}} shall be verified empirically in the next section. Based on this assumption, we obtain the following theorem to prove the monotonicity of the communication overhead during the training process. 
\begin{theorem}
    During the training process of FedDLR,  we have $\mathrm{Rank}(\tilde{\boldsymbol{w}}_{t_h}^k) \leq \mathrm{Rank}(C_1(\frac{1}{K}\sum_{k=1}^{K}C_2(\tilde{\boldsymbol{w}}_{t_{h-1}}^k)))$ and $\mathrm{Rank}(\frac{1}{K}\sum_{k=1}^{K}C_2(\tilde{\boldsymbol{w}}_{t_h}^k)) \leq \mathrm{Rank}(\boldsymbol{w}^k_{t_{h-1}})$. 
    \label{theoremRank}
\end{theorem}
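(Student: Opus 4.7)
The plan is to reinterpret $\mathrm{Rank}(\cdot)$ applied to an uncompressed matrix $A$ as the effective rank under the energy-based TSVD criterion of \eqref{e_TSVD}, i.e., the smallest $r$ with $\sum_{i=1}^{r}\sigma_i^2(A)\geq e\,\|A\|_F^2$. With this reading, both compressed matrices $\boldsymbol{w}^k_{t_{h-1}}$ and $C_l(\cdot)$ have their effective rank equal to their literal matrix rank, so the two claims say that the effective rank handed to the next TSVD call is no larger than the literal rank produced by the previous TSVD call. The single tool driving both parts is the Eckart--Young--Mirsky theorem: for any matrix $A$ and any integer $s$, the tail energy $\sum_{i>s}\sigma_i^2(A)$ equals the squared Frobenius distance from $A$ to its best rank-$s$ approximation, hence is bounded above by $\|A-B\|_F^2$ for every matrix $B$ of rank at most $s$.

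For the first inequality, I would set $r=\mathrm{Rank}\bigl(C_1(\tfrac{1}{K}\sum_k C_2(\tilde{\boldsymbol{w}}^k_{t_{h-1}}))\bigr)=\mathrm{Rank}(\boldsymbol{w}^k_{t_{h-1}})$, noting that $\boldsymbol{w}^k_{t_{h-1}}$ is itself a valid rank-$r$ candidate for $\tilde{\boldsymbol{w}}^k_{t_h}$. Eckart--Young--Mirsky then yields
\begin{equation*}
    \sum_{i>r}\sigma_i^2(\tilde{\boldsymbol{w}}^k_{t_h})\;\leq\;\|\tilde{\boldsymbol{w}}^k_{t_h}-\boldsymbol{w}^k_{t_{h-1}}\|_F^2.
\end{equation*}
Invoking Assumption~\ref{extend_gradient}, the right-hand side is at most $G_3^2\leq(1-e)\,\|\tilde{\boldsymbol{w}}^k_{t_h}\|_F^2$, so
\begin{equation*}
    \sum_{i=1}^{r}\sigma_i^2(\tilde{\boldsymbol{w}}^k_{t_h})\;\geq\;e\,\|\tilde{\boldsymbol{w}}^k_{t_h}\|_F^2,
\end{equation*}
which by the definition in \eqref{e_TSVD} shows that $r$ values suffice for the TSVD criterion on $\tilde{\boldsymbol{w}}^k_{t_h}$; equivalently $\mathrm{Rank}(\tilde{\boldsymbol{w}}^k_{t_h})\leq r$.

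For the second inequality, I would first use the triangle inequality together with the fact that, after a broadcast, the initial weight $\boldsymbol{w}^k_{t_{h-1}}$ is the same across clients, to obtain
\begin{equation*}
    \Bigl\|\tfrac{1}{K}\sum_{k=1}^{K}C_2(\tilde{\boldsymbol{w}}^k_{t_h})-\boldsymbol{w}^k_{t_{h-1}}\Bigr\|_F\;\leq\;\tfrac{1}{K}\sum_{k=1}^{K}\|C_2(\tilde{\boldsymbol{w}}^k_{t_h})-\boldsymbol{w}^k_{t_{h-1}}\|_F\;\leq\;G_3.
\end{equation*}
Since $\boldsymbol{w}^k_{t_{h-1}}$ has rank $r$, Eckart--Young--Mirsky again gives a tail-energy bound on $\tfrac{1}{K}\sum_k C_2(\tilde{\boldsymbol{w}}^k_{t_h})$, which the second half of Assumption~\ref{extend_gradient} converts to $(1-e)$ times its own squared Frobenius norm. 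The energy criterion \eqref{e_TSVD} is therefore met at level $r$, yielding $\mathrm{Rank}(\tfrac{1}{K}\sum_k C_2(\tilde{\boldsymbol{w}}^k_{t_h}))\leq r=\mathrm{Rank}(\boldsymbol{w}^k_{t_{h-1}})$.

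The main obstacle is essentially a bookkeeping one: matching the ambiguous norm $\|\cdot\|$ in Assumption~\ref{extend_gradient} to the Frobenius norm required by Eckart--Young--Mirsky, and making the identification between ``effective (TSVD) rank'' on uncompressed iterates and ``literal rank'' on TSVD outputs explicit, so that the two rank quantities being compared in the theorem are on the same footing. Once that is pinned down, both inequalities reduce to the same two-line energy argument sketched above.
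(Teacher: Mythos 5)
Your proposal is correct and follows essentially the same route as the paper's own proof: both parts rest on the Eckart--Young--Mirsky tail-energy bound with the previous (low-rank) iterate as the candidate approximant, combined with Assumption~\ref{extend_gradient} to show the energy criterion \eqref{e_TSVD} is already met at the previous rank, and both use convexity of the squared norm to handle the client average in the second inequality. Your explicit reinterpretation of $\mathrm{Rank}(\cdot)$ on uncompressed iterates as the effective TSVD rank, and of $\|\cdot\|$ as the Frobenius norm, only makes precise what the paper's proof does implicitly.
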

\begin{proof}
    Please refer to Appendix \ref{Proof of Theorem 2}.
\end{proof}

Recall that the communication overhead is proportional to $r\times(m+n)$, and the two inequalities in Theorem \ref{theoremRank} indicate that the ranks of the models $r$ in both of the local training and aggregation steps do not increase. In contrast, this property cannot be guaranteed in existing client-side low-rank compression approaches since the global model aggregated in the server may not be low-rank. For instance, the method in \cite{zhou2020low} even increases the rank in the aggregation iterations.

\section{Experiments} \label{Exper}
In this section, we present simulation results to demonstrate the potential of FedDLR, we  also compare FedDLR with other state-of-the-art algorithms in terms of both the communication cost during the training stage and computation overhead during the inference stage. 
\subsection{Experimental Setting}
We consider an FL system with $K=10$ clients in total and each client has the same number of data samples for local training. We evaluate our proposed FedDLR method on the commonly-used image dataset CIFAR-10, which consists of 10 classes of $32\times 32$ color images. It contains 50,000 training examples and 10,000 testing examples. Thus, there are 5,000 training images in the local dataset at each client. We use the VGG-11 model\cite{simonyan2014very} with more than 9.7 million parameters for this classification task.

We use mini-batch SGD with batch-size $b=20$ as the optimizer, and set a decaying learning rate of $\eta_t = 0.1 \times 0.5^\frac{t}{10000}$ and compression threshold $e=0.990$. For local training, we use cross entropy as the loss function and perform 25 local training iterations. To show the effectiveness of the proposed FedDLR, we choose FedAvg\cite{mcmahan2017communication} and PruneFL\cite{jiang2019model} as the baselines to compare. 
\begin{figure*}[t]
	\centering
	\begin{minipage}[t]{0.24\linewidth} 
		\centering\includegraphics[width=1\textwidth]{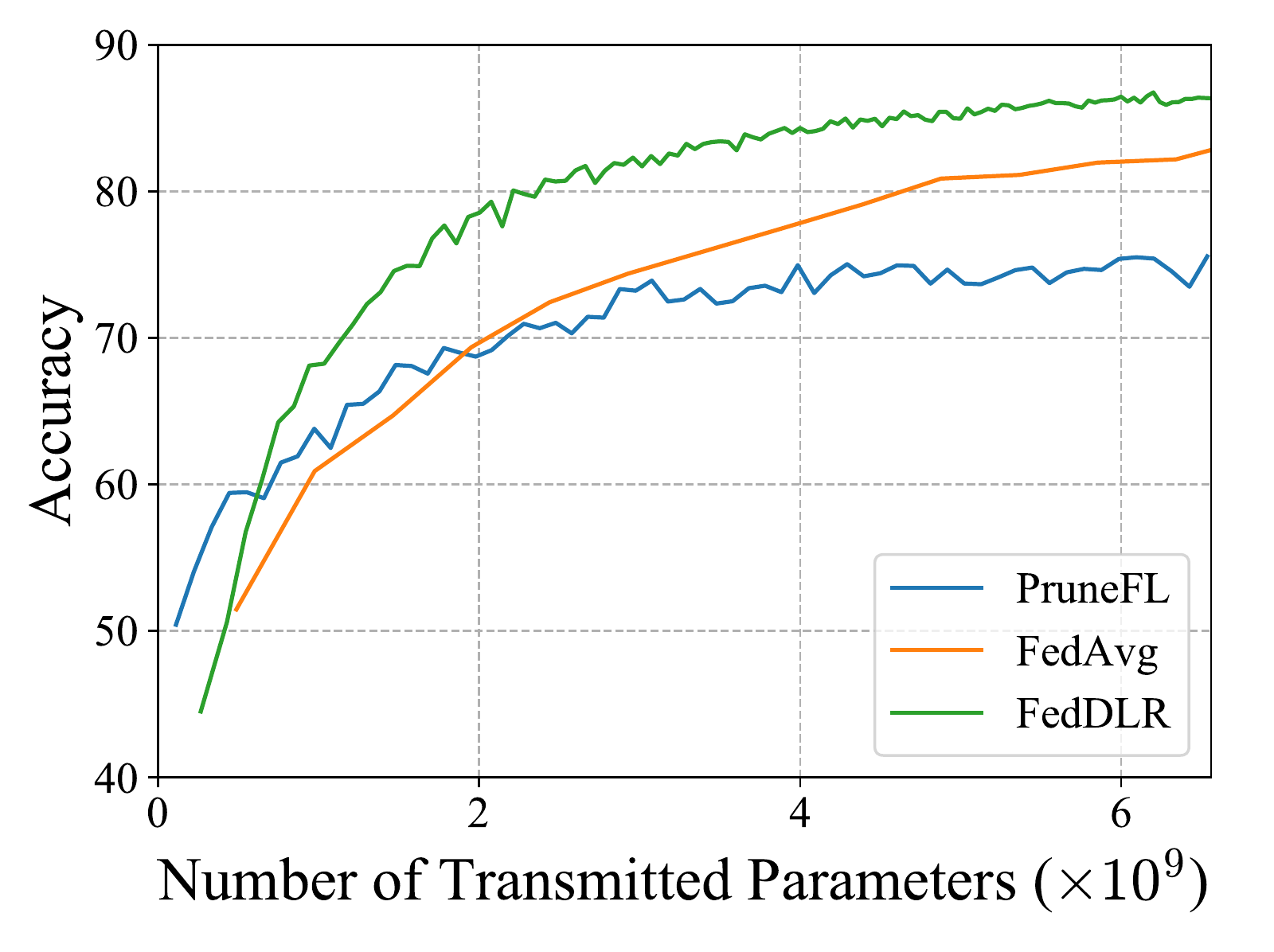}
		\caption{Accuracy versus the  communication cost.}
		\label{comm_acc_cifar10}
	\end{minipage}
	\begin{minipage}[t]{0.24\linewidth}
		\centering\includegraphics[width=1\textwidth]{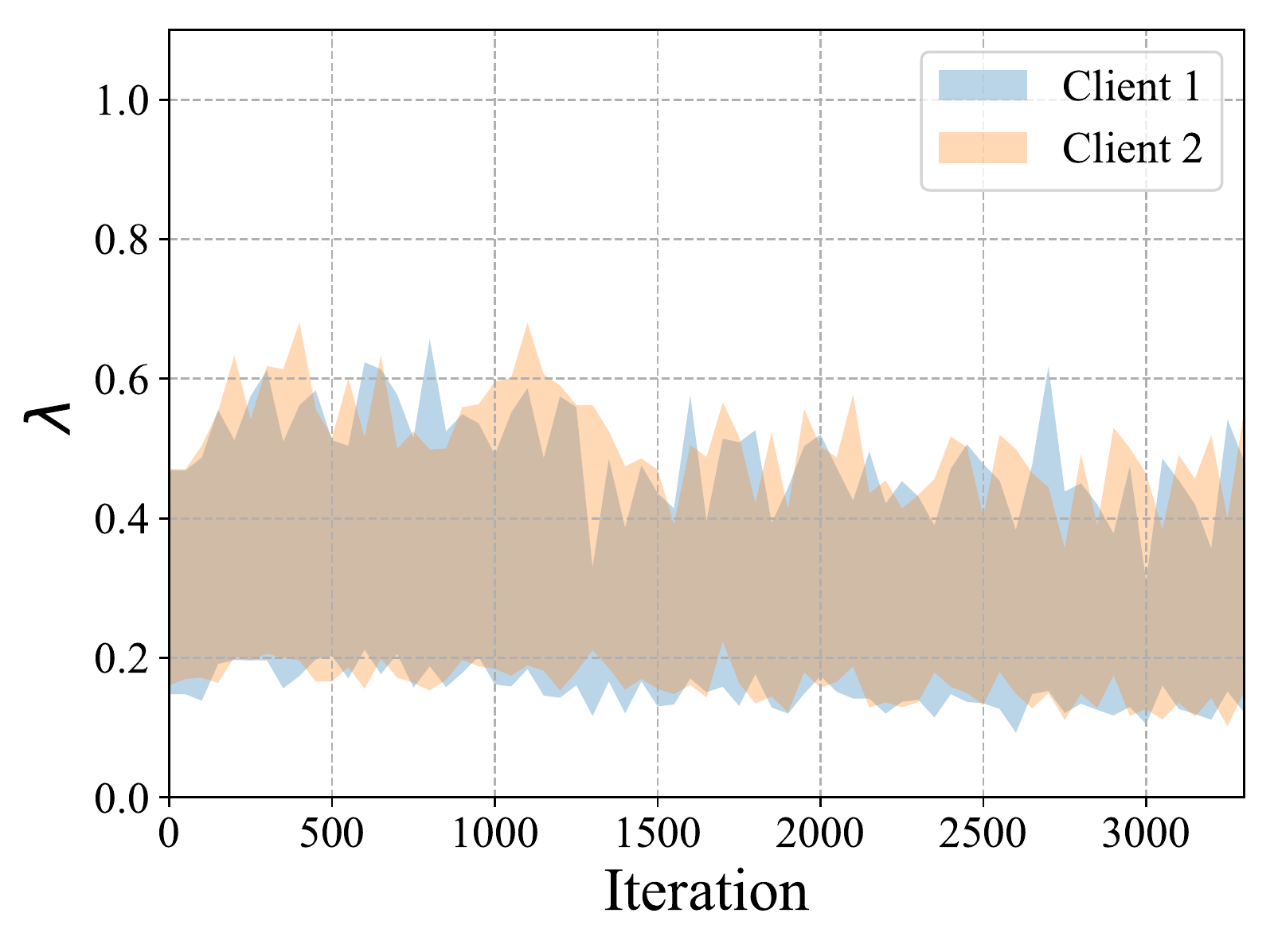}
		\caption{$\lambda$ versus the number of iterations.}
		\label{lambda}
	\end{minipage}
\begin{minipage}[t]{0.24\linewidth} 
	\centering\includegraphics[width=1\textwidth]{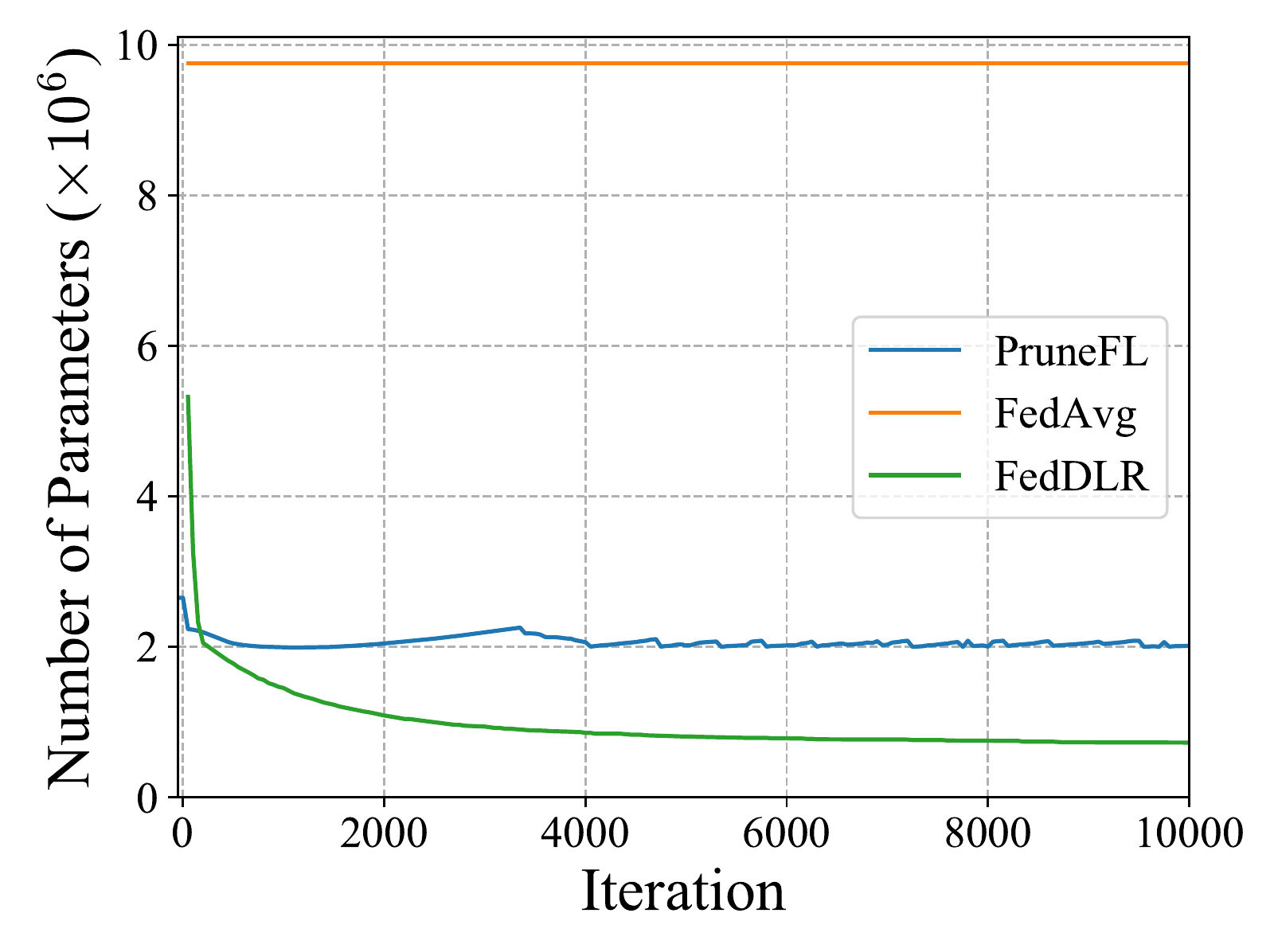}
	\caption{Number of parameters versus the number of iterations.}
	\label{round_params_cifar10}
\end{minipage}
\begin{minipage}[t]{0.245\linewidth}
	\centering{\includegraphics[width=1\textwidth]{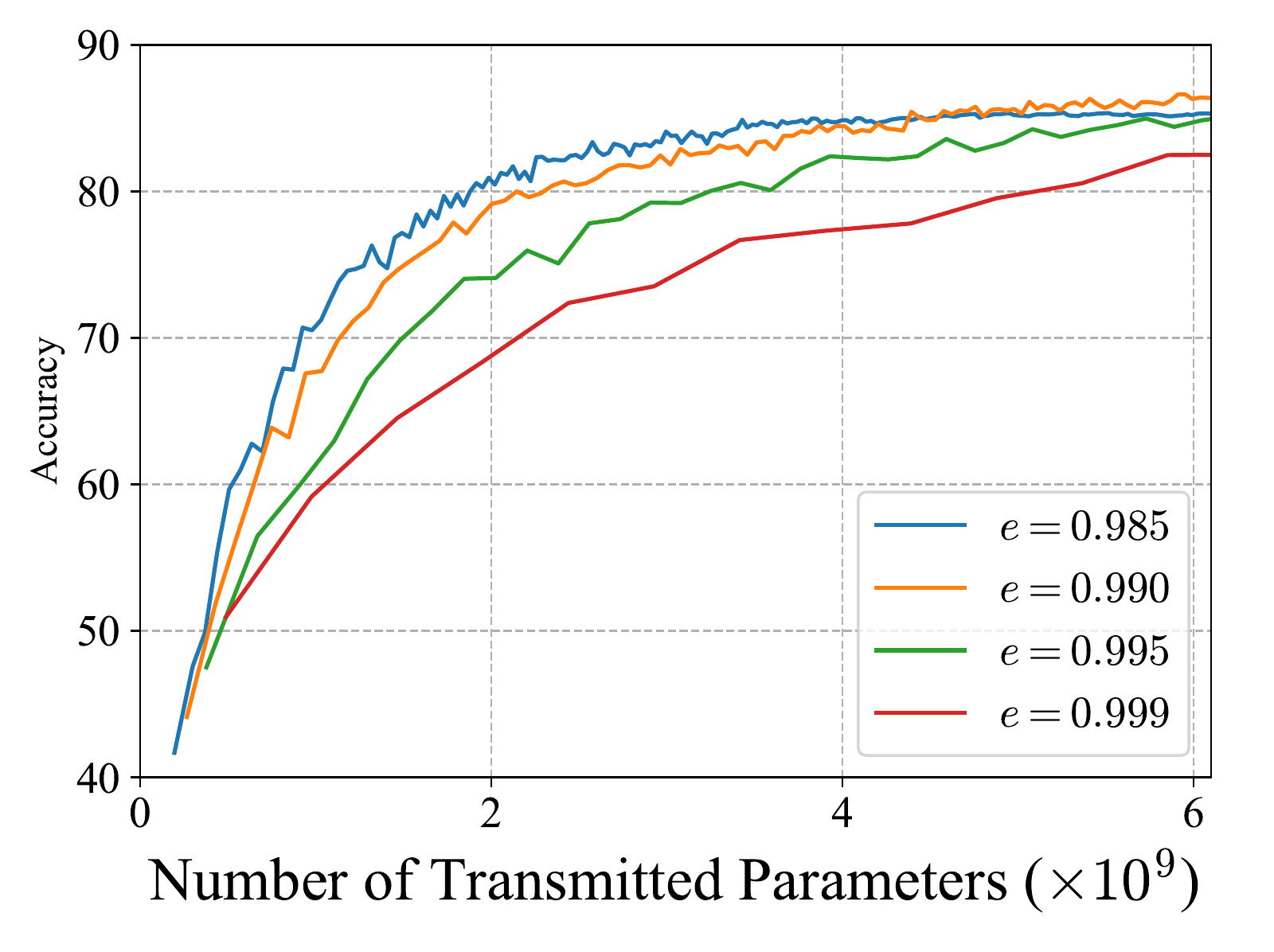}}
	\caption{Accuracy versus the  communication cost with different values of $e$.}
	\label{inf_e}
\end{minipage}
\end{figure*}
\subsection{Results}
We first evaluate the test accuracy versus the communication cost, defined by the number of transmitted model parameters during the training stage. As can be observed in Fig. \ref{comm_acc_cifar10}, given a fixed communication overhead of 6 billion transmitted parameters, the proposed FedDLR achieves a higher test accuracy (86.86\%) than  FedAvg (82.18\%) and PruneFL (75.56\%). 
Furthermore, FedDLR always achieves a better accuracy performance compared to the classic FedAvg during the whole training process, which clearly shows the advantage of introducing the model compression in FL.
Finally, PruneFL obtains a good accuracy at the very beginning during the training, while the convergence speed is much slower.
In contrast, the proposed FedDLR maintains a fast convergence speed among the three investigated FL training schemes. This phenomenon implies that, by exploiting the mathematical structure of the model matrices, the proposed low-rank compression preserves key model information more effectively.

In Figs. \ref{lambda} and \ref{round_params_cifar10}, we investigate the communication overhead during the training process. First, we verify {Assumption 4} by evaluating the values of $\lambda$\footnote{To keep the figure neat, we plot the envelopes of the values of $\lambda$ for all layers of the models at two clients. The values of $\lambda$ at other clients also satisfy $\lambda\leq1$.}, which is defined by $\lambda\triangleq\frac{\max\{\|\tilde{\boldsymbol{w}}_{t_h}^k - \boldsymbol{w}^k_{t_{h-1}}\|,\|C_2(\tilde{\boldsymbol{w}}_{t_h}^k) - \boldsymbol{w}^k_{t_{h-1}}\|\}}{\sqrt{1-e}\min\{\|\tilde{\boldsymbol{w}}_{t_h}^{k}\|, \| \frac{1}{K}\sum_{k=1}^{K}C_2(\tilde{\boldsymbol{w}}_{t_h}^k)\|\}}$. As can be observed in Fig. \ref{lambda},   $\lambda \leq 1$ always holds during the whole training process for all layers of our learning models, which confirms that  Assumption \ref{extend_gradient} is reasonable. 
Then, we evaluate the number of the model parameters that are transmitted in different FL training schemes.
As shown in Fig. \ref{round_params_cifar10}, the model size of FedAvg is constant and large. 
With a pruning operation, the model size in PruneFL is reduced compared to FedAvg. However, without theoretical guarantee, the number of transmitted parameters in PruneFL fluctuates around 2 million and can not be further reduced.
In contrast, our proposed FedDLR is able to monotonically decrease the communication overhead during the training, which verifies the theoretical results in Theorem 2. In addition, the number of parameters is reduced by half compared to the state-of-the-art PruneFL and quickly converges to 0.7 million, which in turn significantly reduces the communication cost in every aggregation step. 
\begin{table}[t]
    \caption{Computation Parameters.}
    \begin{center}
    \begin{tabular}{|c|c|c|c|}
        \hline
         & Parameters (Million) & MACs (Million) & Speedup \\\hhline{|=|=|=|=|}
        No Compression & 9.76 & 153.75 & $\times$1.00 \\ \hline
        FedDLR & \textbf{0.73} & \textbf{18.50} & $\times$\textbf{2.83}\\ 
        \hline
    \end{tabular}
    \label{infer_speed}
    \end{center}
\end{table}

In addition to the advantages in the training stage, our proposed FedDLR also benefits the inference process. 
Since our proposed method FedDLR generates a low-rank model, which approximates a large matrix with two much smaller matrices. Thus, it reduces the computation complexity to speedup the inference stage. 
Table \ref{infer_speed} lists the number of parameters, multiply-accumulate operations (MACs), and the inference efficiency of FedAvg and our proposed FedDLR. It is clearly shown that our compressed low-rank network significantly reduces the MACs and achieves a 2.83$\times$ speedup.

The impact of the hyper-parameter $e$ on the test accuracy is investigated in Fig. \ref{inf_e}. 
With a smaller value of $e$, the model converges faster, i.e., the number of transmitted parameters required for model convergence is smaller. This is because the smaller $e$ is, the fewer parameters are needed to be transmitted in each communication iteration.
However, it is worth noting that the accuracy of the  model first increases then decreases with the value of $e$.
In particular, when $e$ starts to decrease, more redundant model information is compressed. Therefore, the accuracy improves for smaller $e$ given the same amount of communication cost.
Nevertheless, when $e$ is exceedingly small, the overwhelming compression results in an accuracy drop. This also confirms the analytical results in Theorem 1 and Remark 2. Specifically, the error term in \eqref{theoremConvergenceEq} is a  monotonically deceasing function with respect to $e$, which leads to an accuracy loss when $e$ is too small. Hence, the hyper-parameter $e$ has to be carefully chosen in FedDLR to strike a balance between the communication cost, test accuracy, and convergence speed.

\section{Conclusion} \label{Con}
In this paper, we proposed a federated learning algorithm named FedDLR, which compresses the neural networks at both the local and central sides in FL to dramatically reduce the communication cost during the training while keeping satisfactory learning performance. Thanks to the low-rank property, the converged model learned by FedDLR also speedups the inference process. A convergence analysis of FedDLR was provided and we also proved that by FedDLR, the communication overhead during the training is non-increasing. Finally, based upon extensive experiments, it was shown that the proposed FedDLR outperforms other state-of-the-art methods in terms of both accuracy and learning efficiency.
\appendix
\renewcommand*{\thesection}{subsection}
\subsection{Proof of Theorem 1}\label{Proof of Theorem 1}

Similar to the analysis in \cite{basu2019qsparse}, we introduce some auxiliary sequences for each client $k$ and iteration $t$ as follows:
\begin{align}
    &\tilde{\boldsymbol{w}}^{k}_0 = \boldsymbol{w}^{k}_0 , \quad \tilde{\boldsymbol{w}}^{k}_{t+1} = \boldsymbol{w}^{k}_{t} - \eta_{t} \nabla f^k_{\boldsymbol{x}^{k}_t}(\boldsymbol{w}^{k}_t), \\
    &\boldsymbol{g}_t = \frac{1}{K}\sum_{k=1}^{K} \nabla f^k_{\boldsymbol{x}^{k}_t}(\boldsymbol{w}^{k}_t) ,\quad\bar{\boldsymbol{g}}_t = \frac{1}{K}\sum_{k=1}^{K}\nabla f^{k}(\boldsymbol{w}^{k}_t) ,\\
    &\boldsymbol{w}_t = \frac{1}{K}\sum_{k=1}^{K} \boldsymbol{w}^{k}_t,\quad \tilde{\boldsymbol{w}}_{t+1} = \frac{1}{K}\sum_{k=1}^{K} \tilde{\boldsymbol{w}}^{k}_{t+1} = \boldsymbol{w}_{t} - \eta_t \boldsymbol{g}_t.
\end{align}

Based on Assumptions \ref{smoothness_assum} and \ref{gradient_assum}, we  derive a general bound for the gradient expectation following the lines of the proof in \cite{basu2019qsparse} as follows
    \begin{align}\label{eq10}
        \frac{1}{4KT}\sum_{t=0}^{T}&\sum_{k=1}^{K} \mathop{\mathbb{E}} \| \nabla f^k(\boldsymbol{w}^{k}_t)\|^2 \leq \frac{\mathop{\mathbb{E}}[f(\tilde{\boldsymbol{w}}_0)] - f^\star}{\eta_t T} + \frac{\eta_t L \delta^2}{b K}\nonumber\\
        &+ 2 \eta_t^2 L^2 G_1^2 R^2   + \frac{2L^2}{T}\sum_{t=0}^{T}\mathop{\mathbb{E}}{\| \tilde{\boldsymbol{w}}_t - \boldsymbol{w}_t\|}^2.
    \end{align} 

    Note that, for the last term in the right hand side of \eqref{eq10}, we have
	\begin{equation}\label{aggeq}
	\tilde{\boldsymbol{w}}_t =\begin{cases}
	\boldsymbol{w}_t&\text{if } (t+1) \vert R \neq 0\\
	\boldsymbol{w}_{t-1} - \eta_{t-1} \boldsymbol{g}_{t-1}&\text{if } (t+1) \vert R = 0.\\
	\end{cases} 
	\end{equation}
    Then, we derive an upper bound for the last term in the right hand side of  \eqref{eq10}, given by
    \begin{align}
        &\frac{2L^2}{T}\sum_{t=0}^{T}\mathop{\mathbb{E}}{\| \tilde{\boldsymbol{w}}_t - \boldsymbol{w}_t\|}^2 \overset{(a)}{=} \frac{2L^2}{T} \sum_{h=0}^{H-1} \mathop{\mathbb{E}}{\| \tilde{\boldsymbol{w}}_{t_h} - \boldsymbol{w}_{t_h}\|}^2 \\
        &\overset{(b)}{\leq} \frac{4L^2}{T} \sum_{h=0}^{H-1} \mathop{\mathbb{E}}\bigg[\Big\|\tilde{\boldsymbol{w}}_{t_h} - \frac{1}{K} \sum_{k=1}^{K} C_2\big(\tilde{\boldsymbol{w}}_{t_h}^k\big) \Big\|^2\label{triangleeq}\\
        &+ \Big\|\frac{1}{K} \sum_{k=1}^{K} C_2\big(\tilde{\boldsymbol{w}}_{t_h}^k\big) - C_1\Big(\frac{1}{K} \sum_{k=1}^{K} C_2\big(\tilde{\boldsymbol{w}}_{t_h}^k )\big)\Big)\Big\|^2\bigg],\label{triangeleeq2}
    \end{align}
	where $(a)$ follows \eqref{aggeq} and step $(b)$ applies the triangle inequality.
	
    By using the inequality $\|\sum_{k=1}^{K} \boldsymbol{b}_k\|^2 \leq K\sum_{k=1}^{K}\|\boldsymbol{b}_k\|^2$ to bound the square-norm term in  \eqref{triangleeq}, we have
    \begin{align}\label{normeqbound1}
        \Big\|\tilde{\boldsymbol{w}}_{t_h} - \frac{1}{K} \sum_{k=1}^{K} C_2\big(\tilde{\boldsymbol{w}}_{t_h}^k\big) \Big\|^2 \leq& \frac{1}{K} \sum_{k=1}^{K} \Big\|\tilde{\boldsymbol{w}}_{t_h-1}^k - C_2\big(\tilde{\boldsymbol{w}}_{t_h-1}^k\big) \Big\|^2\nonumber\\
        \overset{(c)}{\leq} & (1-e)G_2^2,
    \end{align}
    where  $(c)$ is derived according to the energy-based TSVD method in \eqref{e_TSVD} and Assumption \ref{weight_assum}.

    Similarly, for the square-norm term in \eqref{triangeleeq2}, we have
    \begin{equation}\label{normeqbound2}
        \begin{split}
        &\Big\|\frac{1}{K} \sum_{k=1}^{K} C_2\big(\tilde{\boldsymbol{w}}_{t_h}^k\big) - C_1\Big(\frac{1}{K} \sum_{k=1}^{K} C_2\big(\tilde{\boldsymbol{w}}_{t_h}^k )\big)\Big)\Big\|^2 \\
         &\leq \frac{(1-e)}{K} \sum_{k=1}^{K}\|C_2(\tilde{\boldsymbol{w}}^{k}_{t_h})\|^2    \leq  (1-e)eG_2^2.
        \end{split}
    \end{equation}

    By substituting \eqref{normeqbound1} and \eqref{normeqbound2} into \eqref{triangleeq} and \eqref{triangeleeq2}, respectively, we complete the proof of Theorem \ref{theoremConvergence}.

\subsection{Proof of Theorem 2}\label{Proof of Theorem 2}

Suppose that the initial weight matrix for the local training  has rank $p$, i.e., $\mathrm{Rank}(\boldsymbol{w}^k_{t_{h-1}})= p$. Denote the singular values of $\tilde{\boldsymbol{w}}_{t_h}^k$ in a descending order as $\sigma^k_{t_h,1},\sigma^k_{t_h,2}, \cdots, \sigma^k_{t_h,s}$, where $s$ is the rank of $\tilde{\boldsymbol{w}}_{t_h}^k$. Following the proof in \cite{xu2020trp}, for the local training in each client, we have
\begin{align}
    \frac{\sum_{j=p+1}^{s} {\sigma^k_{t_h,j}}^2}{\sum_{j=1}^{s} {\sigma^k_{t_h,j}}^2} &\overset{(d)}{\leq} \frac{\|\tilde{\boldsymbol{w}}_{t_h}^k - \boldsymbol{w}^k_{t_{h-1}}\|^2}{\|\tilde{\boldsymbol{w}}^k_{t_{h}}\|^2} \leq 1-e
\end{align}
where $(d)$ follows the Eckart-Young-Mirsky theorem and the last step holds according to Assumption \ref{extend_gradient}.
It shows that the rank of the weight matrix does not increase in the local training with the energy-based TSVD.

Similarly, we prove the non-increasing property of the model rank at the server side in the following.
Suppose that the weight matrix broadcasted at the server in the last aggregation round has rank $q$, i.e., $\mathrm{Rank}(\boldsymbol{w}_{t_{h-1}})= q$.
Denote the singular values of $\frac{1}{K}\sum_{k=1}^{K}C_2(\tilde{\boldsymbol{w}}_{t_h}^k)$ in a descending order as $\sigma_{t_h1},\sigma_{t_h2}, \cdots, \sigma_{t_hz}$, where $z$ is the rank of $\frac{1}{K}\sum_{k=1}^{K}C_2(\tilde{\boldsymbol{w}}_{t_h}^k)$. Then, for the aggregation at the server, we have
\begin{equation}
    \begin{split}
    \frac{\sum_{j=q+1}^{z} \sigma_{t_hj}^2 }{\sum_{j=1}^{z} \sigma_{t_hj}^2 } &\leq \frac{\|\frac{1}{K}\sum_{k=1}^{K}C_2(\tilde{\boldsymbol{w}}_{t_h}^k) - \boldsymbol{w}_{t_{h-1}})\|^2}{\|\frac{1}{K}\sum_{k=1}^{K}C_2(\tilde{\boldsymbol{w}}_{t_h}^k)\|^2} \\
    &\leq  \frac{\frac{1}{K}\sum_{k=1}^{K} \|C_2(\tilde{\boldsymbol{w}}_{t_h}^k) - \boldsymbol{w}_{t_{h-1}}\|^2}{\|\frac{1}{K}\sum_{k=1}^{K}C_2(\tilde{\boldsymbol{w}}_{t_h}^k)\|^2} \\
    &\leq 1-e,
    \end{split}
\end{equation}
which completes the proof of Theorem 2.

\bibliographystyle{IEEEtran}
\bibliography{IEEEabrv,mybib}
\end{document}